\documentclass{article}
\usepackage[preprint]{colm2026_conference}

\usepackage{microtype}

\usepackage{amsfonts}
\usepackage{amsmath}
\usepackage{amssymb}
\usepackage{nicefrac}

\newcounter{proposition}
\renewcommand{\theproposition}{\arabic{proposition}}
\newenvironment{proposition}{\refstepcounter{proposition}\par\medskip\noindent\textbf{Proposition~\theproposition.}\itshape}{\par\medskip}
\newenvironment{proof}{\par\noindent\textbf{Proof.}}{\hfill$\square$\par\medskip}

\usepackage{booktabs}
\usepackage{graphicx}
\usepackage[dvipsnames]{xcolor}
\usepackage{caption}
\usepackage{colortbl}
\usepackage{wrapfig}

\definecolor{mygray}{gray}{.9}

\usepackage{enumitem}

\usepackage{xspace}

\usepackage{algorithm}
\usepackage{algorithmic}

\usepackage{lineno}

\usepackage{url}
\usepackage[colorlinks=true,citecolor=DarkOrchid,linkcolor=NavyBlue,urlcolor=NavyBlue]{hyperref}

\usepackage{amsmath,amsfonts,bm}

\def\eqref#1{equation~\ref{#1}}

\def\1{\bm{1}}

\DeclareMathAlphabet{\mathsfit}{\encodingdefault}{\sfdefault}{m}{sl}
\SetMathAlphabet{\mathsfit}{bold}{\encodingdefault}{\sfdefault}{bx}{n}

\newcommand{\textproc}[1]{\textsc{#1}}

\title{IndexCache: Accelerating Sparse Attention via Cross-Layer Index Reuse}

\author{Yushi Bai$^{1\dagger}$, Qian Dong$^{1\dagger}$, Ting Jiang$^{2}$, Xin Lv$^{2}$ \\ \textbf{Zhengxiao Du$^{2}$, Aohan Zeng$^{12}$, Jie Tang$^{1}$, Juanzi Li$^{1}$}\vspace{0.42em} \\ 
$^1$Tsinghua University
  \quad
  $^2$Z.ai
}

\footnotetext[2]{Work done while interned at Z.ai.}

\begin{document}

\ifcolmsubmission
\linenumbers
\fi

\maketitle

\begin{abstract}
Long-context agentic workflows have emerged as a defining use case for large language models, making attention efficiency critical for both inference speed and serving cost.
Sparse attention addresses this challenge effectively, and DeepSeek Sparse Attention~(DSA) is a representative production-grade solution: a lightweight \emph{lightning indexer} selects the top-$k$ most relevant tokens per query, reducing core attention from~$O(L^2)$ to~$O(Lk)$.
However, the indexer itself retains~$O(L^2)$ complexity and must run independently at every layer, despite the fact that the resulting top-$k$ selections are highly similar across consecutive layers.
We present \textbf{IndexCache}, which exploits this cross-layer redundancy by partitioning layers into a small set of \emph{Full}~layers that run their own indexers and a majority of \emph{Shared}~layers that simply reuse the nearest Full layer's top-$k$ indices.
We propose two complementary approaches to determine and optimize this configuration.
\emph{Training-free IndexCache} applies a greedy search algorithm that selects which layers to retain indexers by directly minimizing language modeling loss on a calibration set, requiring no weight updates.
\emph{Training-aware IndexCache} introduces a multi-layer distillation loss that trains each retained indexer against the averaged attention distributions of all layers it serves, enabling even simple interleaved patterns to match full-indexer accuracy.
Experimental results on a 30B DSA model show that IndexCache can remove 75\%\ of indexer computations with negligible quality degradation, achieving up to 1.82$\times$~prefill speedup and 1.48$\times$~decode speedup compared to standard DSA.
These positive results are further confirmed by our preliminary experiments on the production-scale GLM-5 model (Figure~\ref{fig:glm5}).
\end{abstract}

\begin{figure}[htbp]
\centering
\includegraphics[width=\textwidth]{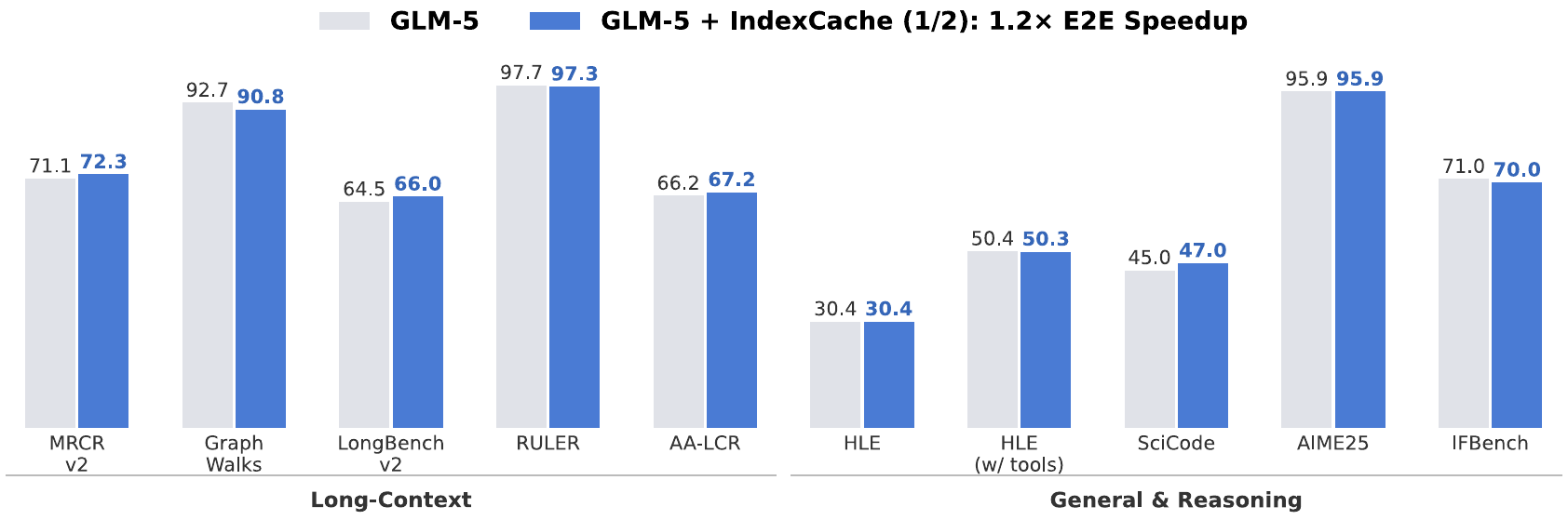}
\caption{%
    Benchmark comparison between GLM-5 and GLM-5 + IndexCache. IndexCache removes 50\% of indexer computations while maintaining comparable performance across both long-context and reasoning tasks, delivering $\sim1.2\times$ end-to-end speedup.
}
\label{fig:glm5}
\end{figure}

\section{Introduction}
\label{sec:intro}

The self-attention mechanism~\citep{vaswani2017attention} is a cornerstone of modern large language models~(LLMs), yet its quadratic complexity in sequence length presents a fundamental bottleneck for long-context inference.
As LLMs are increasingly deployed in settings that demand extended contexts, such as long chain-of-thought reasoning, multi-step agentic workflows, and retrieval-augmented generation over web-scale sources, reducing attention cost without sacrificing model quality has become a critical research problem.

\begin{wrapfigure}{r}{0.4\linewidth}
    \centering
    \vspace{-1em}
    \includegraphics[width=\linewidth]{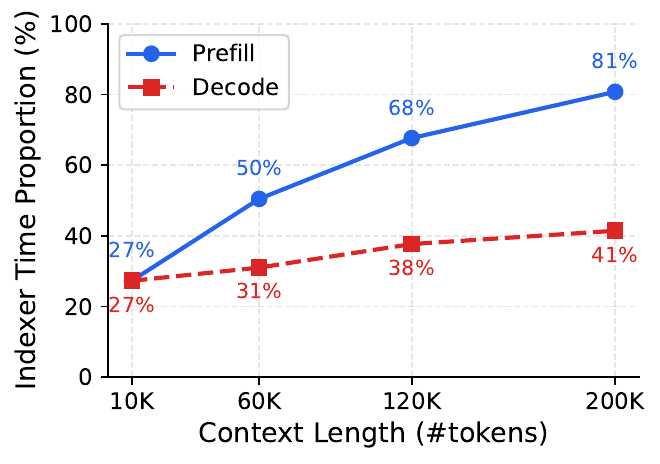}
    \vspace{-2em}
\end{wrapfigure}
Sparse attention offers a principled solution: instead of attending to all preceding tokens, each query selects only the most relevant subset.
Among recent approaches~\citep{nsa,moba,infllm-v2,hysparse,longcat}, DeepSeek Sparse Attention~(DSA)~\citep{deepseekv32} stands out as a production-grade trainable sparse attention mechanism.
For sparse token selection, DSA introduces an additional \emph{lightning indexer} module at each layer that scores all preceding tokens and selects the top-$k$ for the subsequent core attention.
This reduces per-layer core attention from~$O(L^2)$ to~$O(Lk)$ while preserving model quality through continued pre-training.
However, the indexer itself still operates at~$O(L^2)$ at \emph{every} layer: although cheaper per-FLOP than the main attention computation, its total cost across~$N$ layers is~$O(NL^2)$, which grows quadratically with context length and becomes a non-negligible fraction of the total attention budget.
As shown in the figure above, profiling a 30B DSA model reveals that the indexer's share of total latency rises sharply with context length, particularly during the prefill stage, while the rest of the computations grow only modestly. This indicates that \textbf{reducing indexer cost is the key to accelerating long-context DSA inference}.

A key insight motivating our work is that the top-$k$ selections produced by the indexer are \emph{highly correlated across consecutive layers}---an instance of the broader cross-layer token selection stability observed in full attention models~\citep{kascade,hysparse}.
While prior methods exploit this stability by reusing indices from full attention anchor layers, they do not directly apply to sparse attention, since in DSA, full attention is only computed via the lightweight indexer.
We empirically verify this for DSA by computing the pairwise top-$k$ index overlap across all layers (Appendix~\ref{app:topk-overlap}): adjacent layers share 70-100\% of their selected tokens, and the heatmap reveals distinct layer clusters with mutually high overlap, suggesting that most indexer computations are redundant.
This leaves a simple but impactful opportunity: \emph{can we remove the majority of indexers in DSA and let most layers reuse top-$k$ indices from a small number of retained indexer layers, without degrading quality?}

We answer affirmatively with \textbf{IndexCache}, a method that eliminates up to~75\%\ of indexer computations in DSA through cross-layer index reuse.
IndexCache partitions layers into \texttt{F}~(Full) layers that retain their indexers and \texttt{S}~(Shared) layers that inherit top-$k$ indices from the nearest preceding~\texttt{F}~layer, adding only one conditional branch in inference~(Figure~\ref{fig:pseudocode-comparison}).
We propose two complementary approaches to determine and optimize this configuration:

\begin{itemize}[leftmargin=*,itemsep=4pt]
    \item \textbf{Training-free IndexCache} applies to any off-the-shelf DSA model without weight updates.
    We show that na\"{\i}ve uniform interleaving degrades quality, and propose a \emph{greedy layer selection} algorithm that uses LM loss on a small calibration set to determine the optimal pattern, namely which layers to retain indexers.
    The greedy solution retains only~$1/4$ of indexers while matching the original DSA model's downstream performance.

    \item \textbf{Training-aware IndexCache} optimizes the model \emph{parameters} for cross-layer sharing.
    We introduce a \emph{multi-layer distillation loss} that trains each retained indexer against the attention distributions of all layers it serves.
    Under this objective, even a simple uniform interleaved pattern achieves on par with the original per-layer indexer design.
\end{itemize}

Empirically, on a 30B DSA model evaluated across nine long-context and reasoning benchmarks, both training-free (with greedy pattern search) and training-aware IndexCache retain only~$1/4$ of indexers with negligible quality degradation, yielding up to \textbf{1.82$\times$}~prefill speedup and \textbf{1.48$\times$}~decode speedup at 200K context length. Preliminary results on the 744B GLM-5 further confirm scalability, achieving at least 1.3$\times$ speedup with negligible degradation in long-context performance.

\begin{figure}[t]
\centering
\begin{minipage}[t]{0.48\textwidth}
\begin{algorithm}[H]
\caption*{\textbf{(a) Standard DSA Inference}}
\begin{algorithmic}[1]
\REQUIRE Input~$\mathbf{X}$, layers~$1 \ldots N$
\FOR{$\ell = 1$ \TO $N$}
    \STATE $\mathbf{I}^{(\ell)} \leftarrow \textproc{Indexer}_\ell(\mathbf{X})$
    \STATE $\mathcal{T}^{(\ell)} \leftarrow \mathrm{Top\text{-}k}(\mathbf{I}^{(\ell)})$
    \STATE $\mathbf{X} \leftarrow \textproc{SparseAttn}_\ell(\mathbf{X},\, \mathcal{T}^{(\ell)})$
    \STATE $\mathbf{X} \leftarrow \textproc{FFN}_\ell(\mathbf{X})$ \hfill $\triangleright$~\textrm{\small + norm, residual, etc.}
\ENDFOR
\end{algorithmic}
\end{algorithm}
\end{minipage}
\hfill
\begin{minipage}[t]{0.48\textwidth}
\begin{algorithm}[H]
\caption*{\textbf{(b) IndexCache Inference}}
\begin{algorithmic}[1]
\REQUIRE Input~$\mathbf{X}$, layers~$1 \ldots N$, pattern~$\mathbf{c}$
\FOR{$\ell = 1$ \TO $N$}
    \IF{\textcolor{red}{$c_\ell = \texttt{F}$}}
        \STATE $\mathbf{I}^{(\ell)} \leftarrow \textproc{Indexer}_\ell(\mathbf{X})$
        \STATE $\mathcal{T}^{(\ell)} \leftarrow \mathrm{Top\text{-}k}(\mathbf{I}^{(\ell)})$
        \STATE \textcolor{red}{$\mathcal{T}_{\mathrm{cache}} \leftarrow \mathcal{T}^{(\ell)}$}
    \ELSE[\textcolor{red}{$c_\ell = \texttt{S}$}]
        \STATE \textcolor{red}{$\mathcal{T}^{(\ell)} \leftarrow \mathcal{T}_{\mathrm{cache}}$} \hfill \textcolor{red}{$\triangleright$~reuse}
    \ENDIF
    \STATE $\mathbf{X} \leftarrow \textproc{SparseAttn}_\ell(\mathbf{X},\, \mathcal{T}^{(\ell)})$
    \STATE $\mathbf{X} \leftarrow \textproc{FFN}_\ell(\mathbf{X})$ \hfill $\triangleright$~\textrm{\small + norm, residual, etc.}
\ENDFOR
\end{algorithmic}
\end{algorithm}
\end{minipage}
\caption{%
    Side-by-side comparison of inference loops.
    \textbf{(a)}~Standard DSA runs the lightning indexer at every layer.
    \textbf{(b)}~IndexCache adds a single \textcolor{red}{conditional branch} (red lines): \texttt{F}~layers compute and cache fresh indices; \texttt{S}~layers reuse the cached indices.
    Note that~$\mathcal{T}_{\mathrm{cache}}$ is a temporary buffer holding only the current index tensor; it is overwritten at each~\texttt{F}~layer and requires no additional GPU memory beyond what standard DSA already allocates.
}
\label{fig:pseudocode-comparison}
\end{figure}

\section{Preliminary}
\label{sec:background}

\subsection{DeepSeek Sparse Attention}
\label{sec:bg-dsa}

DeepSeek Sparse Attention~(DSA)~\citep{deepseekv32} decomposes each attention layer into two stages: \emph{selection} and \emph{computation}.
A lightweight \emph{lightning indexer} first scores all preceding tokens against the current query using a multi-head ReLU-gated dot product, then selects the top-$k$ highest-scoring positions.
The main attention is computed only over this sparse subset, reducing per-layer core attention from~$O(L^2)$ to~$O(Lk)$ with~$k{=}2048 \ll L$, where~$L$ is the sequence length.
The indexer is designed for efficiency: it uses few heads, low-rank projections, and FP8~arithmetic, making it an order of magnitude cheaper per-FLOP than the main Multi-head Latent Attention~(MLA)~\citep{deepseekv2}.

DSA is instantiated under MLA and introduced through two-staged continue pre-training.
First, a short \emph{dense warm-up} trains only the indexer via KL-divergence distillation against the aggregated full attention distribution at each layer, while all other parameters are frozen.
Then, a longer \emph{sparse training} phase activates top-$k$ selection and jointly optimizes the entire model, with the indexer receiving distillation gradients on a detached computational graph.

Despite these efficiency gains, the indexer itself still operates at~$O(L^2)$: at every layer, it must independently score all preceding tokens to determine its own top-$k$ set.
Across a model with~$N$ layers, the total indexer cost is~$O(NL^2)$, and at long context lengths this becomes a significant fraction of the total attention budget.
A natural question is whether all~$N$ per-layer indexer computations are truly necessary, and whether the redundancy across layers can be exploited.

\subsection{Cross-Layer Stability of Token Selection}
\label{sec:bg-stability}

The answer comes from a broader empirical finding: the set of important tokens is remarkably stable across consecutive transformer layers.
Both \citet{kascade} and \citet{hysparse} observe that adjacent layers share the vast majority of their top-$k$ attention mass, and exploit this by designating a few \emph{anchor layers} that compute full attention while letting intermediate layers reuse the anchor's top-$k$ indices.

Crucially, both approaches depend on \emph{full attention} as the oracle for identifying important tokens.
In DSA, full attention has been eliminated entirely---replaced by the lightweight indexer.
This raises a question that, to our knowledge, has not been addressed: \emph{does the indexer's output also exhibit cross-layer stability?}
If so, we can apply the same sharing principle to eliminate redundant indexer computations without requiring any full attention oracle at all.
Furthermore, what is the maximum reuse ratio achievable before quality degrades, and can we adapt the model to close the performance gap introduced by aggressive index reuse? We present two novel complementary methods to achieve this in the following section.
\section{Method}
\label{sec:method}

\paragraph{Notation.}
We denote the number of transformer layers by~$N$, the sequence length by~$L$, and the number of selected tokens per query by~$k$.
At layer~$\ell$, the lightning indexer produces a score vector~$\mathbf{I}^{(\ell)}_{t} \in \mathbb{R}^{L}$ for query position~$t$, from which the top-$k$ index set~$\mathcal{T}^{(\ell)}_t = \mathrm{Top\text{-}k}(\mathbf{I}^{(\ell)}_t)$ is extracted ($k=2048$ throughout this paper).
We write~$\mathbf{p}^{(\ell)}_{t}$ for the aggregated attention distribution at layer~$\ell$ for position $t$ (obtained by averaging softmax attention weights across heads), and~$\mathbf{q}^{(\ell)}_t = \mathrm{Softmax}(\mathbf{I}^{(\ell)}_t)$ for the indexer's output distribution.

\paragraph{Overview.}
IndexCache modifies DSA by partitioning the~$N$ layers into two roles, encoded as a binary \emph{pattern string}~$\mathbf{c} = c_1 c_2 \cdots c_N$ with~$c_\ell \in \{\texttt{F}, \texttt{S}\}$:
\begin{itemize}[leftmargin=*,itemsep=2pt]
    \item \texttt{F}~(\textbf{Full}): the layer retains its indexer, computes fresh~$\mathcal{T}^{(\ell)}_t$ over all preceding tokens, and performs sparse core attention on the selected subset, identical to standard DSA.
    \item \texttt{S}~(\textbf{Shared}): the layer has \emph{no indexer}. It inherits the index set from the nearest preceding~\texttt{F}~layer, \emph{i.e.}\@\xspace, $\mathcal{T}^{(\ell)}_t \leftarrow \mathcal{T}^{(f(\ell))}_t$ where~$f(\ell) = \max\{j < \ell : c_j = \texttt{F}\}$, and directly applies sparse core attention using those inherited indices.
\end{itemize}
The first layer is always~\texttt{F} to seed the initial indices.
At inference, an~\texttt{S}~layer simply skips the indexer forward pass and reuses the cached index tensor from its~\texttt{F}~predecessor.
Figure~\ref{fig:pseudocode-comparison} illustrates the simplicity of this modification: the only change to the per-layer inference loop is a single conditional branch that either runs the indexer or copies the cached indices.

The key design question is how to choose the pattern~$\mathbf{c}$.
If most layers can safely share indices, a large fraction of the~$O(NL^2)$ total indexer cost can be eliminated while the~$O(NLk)$ core attention remains unchanged.
We propose two approaches: a \emph{training-free} method that determines~$\mathbf{c}$ via greedy search on an established DSA model~(Section~\ref{sec:method-trainfree}), and a \emph{training-aware} method that jointly optimizes the indexer parameters for cross-layer sharing via a multi-layer distillation loss~(Section~\ref{sec:method-trainaware}).

\subsection{Training-Free IndexCache}
\label{sec:method-trainfree}

Given a pretrained DSA model, our goal is to find a pattern~$\mathbf{c}$ that maximizes the number of~\texttt{S}~layers while minimizing the impact on model quality.
We first discuss why the most obvious approach fails, then present our greedy search algorithm.

\subsubsection{Why Uniform Interleaving Is Suboptimal}
\label{sec:method-interleave}

The simplest strategy is uniform interleaving: retain every~$r$-th layer's indexer and skip the rest (e.g., \texttt{FSSSFSSS}\dots for~$r{=}4$).
However, this ignores the fact that indexer importance varies significantly across layers.
We observe empirically that certain layers, particularly those in the early and transitional regions of the network, are far more sensitive to indexer removal than others.
Uniform interleaving may remove a critical indexer while retaining a redundant one, leading to noticeable quality degradation (see Section~\ref{sec:exp-trainfree} for quantitative comparisons).
This motivates a data-driven approach: let the model itself tell us which indexers are expendable, leading to our greedy layer selection algorithm.

\begin{algorithm}[t]
\caption{Greedy Layer Selection for Training-Free IndexCache}
\label{alg:greedy}
\begin{algorithmic}[1]
\REQUIRE DSA model~$M$ with~$N$ layers, calibration batches~$\mathcal{D}$, target \#~of~\texttt{S}~layers~$K$
\ENSURE Optimized pattern~$\mathbf{c}^*$
\STATE $\mathbf{c} \leftarrow \texttt{F}^N$ \hfill $\triangleright$~Initialize all layers as Full
\STATE $\mathcal{R} \leftarrow \{2, 3, \ldots, N\}$ \hfill $\triangleright$~Candidates (layer~1 always~\texttt{F})
\FOR{$\mathrm{step} = 1$ \TO $K$}
    \STATE $\ell^* \leftarrow \arg\min_{\ell \in \mathcal{R}} \;\textproc{EvalLoss}\bigl(M,\, \mathcal{D},\, \mathbf{c}|_{c_\ell \to \texttt{S}}\bigr)$
    \STATE $c_{\ell^*} \leftarrow \texttt{S}$,\; $\mathcal{R} \leftarrow \mathcal{R} \setminus \{\ell^*\}$
\ENDFOR
\RETURN $\mathbf{c}$
\end{algorithmic}
\end{algorithm}

\subsubsection{Layer Selection Algorithm}
\label{sec:method-greedy}

We propose a greedy search that incrementally converts~\texttt{F}~layers to~\texttt{S}~layers, using language modeling loss on a small calibration set as a proxy for downstream quality.

\paragraph{Calibration set.}
We cache~$B$ mini-batches from the training data.
All candidate patterns are evaluated on exactly the same batches, ensuring that loss differences reflect only the effect of the pattern change, not data variance. The loss is derived from a forward pass on the whole batch $\mathcal{D}$: $\textproc{EvalLoss}\bigl(M,\, \mathcal{D},\, \mathbf{c}\bigl)$.

\paragraph{Search procedure.}
Starting from the all-\texttt{F} baseline~($c_\ell = \texttt{F}$ for all~$\ell$), the algorithm proceeds for~$K$ steps, where~$K$ is the target number of~\texttt{S}~layers (e.g., $K = 3N/4$ to retain only~$1/4$ of indexers).
At each step, we iterate over all currently-\texttt{F} layers (excluding the first layer), tentatively flip each to~\texttt{S}, evaluate the resulting LM loss, and commit the flip that yields the lowest loss.
Algorithm~\ref{alg:greedy} presents the full procedure.

\paragraph{Complexity.}
A full search from all-\texttt{F} to all-\texttt{S} performs~$N(N{-}1)/2$ forward passes.
When pipeline parallelism partitions the model into~$P$ stages, we accelerate the search by splitting layers into~$P$ blocks (with each block's first layer fixed as~\texttt{F}) and searching blocks sequentially within each step: the best flip in each block is committed before the next block is searched, so that up to~$P$ layers are placed per step and total forward passes are reduced by roughly~$P{\times}$.

\begin{wrapfigure}{r}{0.35\linewidth}
    \centering
    \vspace{-1em}
    \includegraphics[width=\linewidth]{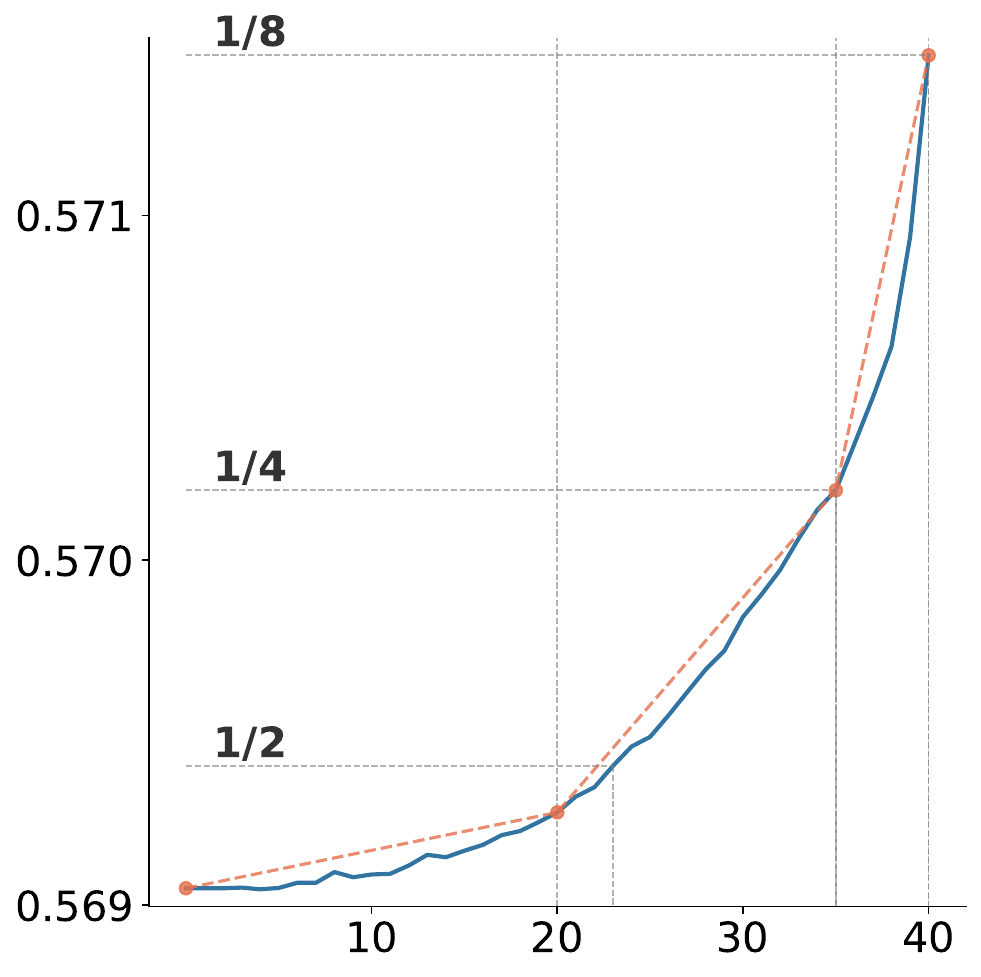}
    \vspace{-4em}
\end{wrapfigure}
\paragraph{Properties of the greedy solution.}
Although greedy search does not guarantee global optimality, we consistently observe three satisfying properties:
\begin{enumerate}[leftmargin=*,itemsep=1pt,label=(\arabic*)]
    \item The searched pattern outperforms uniform interleaving at the same retention ratio (Table~\ref{tab:trainfree}).
    \item As shown in the right figure (the steps for the 1/2, 1/4, and 1/8 retention ratios are marked), the per-step LM validation loss curve reveals a clear separation between ``easy'' layers (the first 20 steps) and ``critical'' layers (after 35 steps), suggesting a natural ordering of indexer importance.
    \item Results are stable across different calibration sets, indicating that this importance ranking is an intrinsic model property rather than a data artifact. Moreover, the LM loss serves as a valid proxy for downstream tasks, as lower LM loss is positively correlated with better task performance.
\end{enumerate}

\subsection{Training-Aware IndexCache with Multi-Layer Distillation}
\label{sec:method-trainaware}

Training-free IndexCache requires no weight updates, but is limited by the fact that each indexer was originally trained to serve only its own layer.
When training a DSA model from scratch or via continued pre-training, we can do better: explicitly training each retained indexer to serve \emph{multiple} layers simultaneously.

\paragraph{From single-layer to multi-layer distillation.}
In standard DSA training~(Section~\ref{sec:bg-dsa}), each indexer at layer~$\ell$ is distilled via KL divergence against its own layer's aggregated attention distribution~$\mathbf{p}^{(\ell)}_t$: $\mathcal{L}^{\mathrm{I}}
    = \sum_{t} D_{\mathrm{KL}}\!\left(
        \mathbf{p}^{(\ell)}_{t} \,\big\|\, \mathbf{q}^{(\ell)}_t
      \right)$.
We generalize this to a multi-layer objective.
Let layer~$\ell$ be a retained~\texttt{F}~layer, and let layers~$\ell{+}1, \ldots, \ell{+}m$ be the subsequent~\texttt{S}~layers that will reuse its index set~$\mathcal{T}^{(\ell)}_t$.
The multi-layer distillation loss is:
{\footnotesize
\begin{equation}
    \mathcal{L}^{\mathrm{I}}_{\mathrm{multi}}
    = \sum_{j=0}^{m} \frac{1}{m+1}\sum_{t}
      D_{\mathrm{KL}}\!\left(
        \mathbf{p}^{(\ell+j)}_{t} \,\big\|\, \mathbf{q}^{(\ell)}_t
      \right),
    \label{eq:multi-distill}
\end{equation}
}

Intuitively, this encourages the indexer to predict a top-$k$ set that is jointly useful for all layers it serves, rather than overfitting to layer~$\ell$ alone.

\paragraph{Gradient equivalence to distillation against the averaged distribution.}
A natural concern is whether optimizing a sum of KL terms introduces unexpected interactions.
We show that the multi-layer loss has a clean interpretation: it produces \emph{exactly the same gradient} as distilling against a single averaged target.

Define the averaged target~$\bar{\mathbf{p}}_{t} = \sum_{j=0}^{m} \frac{1}{m+1}\mathbf{p}^{(\ell+j)}_{t}$ and the corresponding single-target loss:
{\footnotesize
\begin{equation}
    \mathcal{L}^{\mathrm{I}}_{\mathrm{avg}}
    = \sum_{t} D_{\mathrm{KL}}\!\left(
        \bar{\mathbf{p}}_{t} \,\big\|\, \mathbf{q}^{(\ell)}_t
      \right).
    \label{eq:avg-distill}
\end{equation}
}

\begin{proposition}
\label{prop:grad-equiv}
$\nabla_\theta \mathcal{L}^{\mathrm{I}}_{\mathrm{multi}} = \nabla_\theta \mathcal{L}^{\mathrm{I}}_{\mathrm{avg}}$.
\end{proposition}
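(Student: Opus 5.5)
The plan is to expand each KL divergence into a cross-entropy term plus an entropy term and to observe that only the cross-entropy depends on $\theta$. Here $\theta$ denotes the parameters of the indexer at layer~$\ell$. The targets $\mathbf{p}^{(\ell+j)}_t$ are treated as constants with respect to $\theta$. This matches the paper's setup, where the indexer receives distillation gradients on a detached computational graph (Section~\ref{sec:bg-dsa}), so no gradient flows from the KL loss into the attention distributions.

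First, for any fixed target $\mathbf{p}$ and any $\mathbf{q}^{(\ell)}_t$, write
\[
D_{\mathrm{KL}}\bigl(\mathbf{p}\,\|\,\mathbf{q}^{(\ell)}_t\bigr) = \sum_{s} p_s \log p_s \;-\; \sum_{s} p_s \log q^{(\ell)}_{t,s}.
\]
The first sum is the negative entropy $-H(\mathbf{p})$ and does not depend on $\theta$. Its gradient therefore vanishes, and
\[
\nabla_\theta D_{\mathrm{KL}}\bigl(\mathbf{p}\,\|\,\mathbf{q}^{(\ell)}_t\bigr) = -\sum_s p_s\, \nabla_\theta \log q^{(\ell)}_{t,s}.
\]
This gradient is \emph{linear} in $\mathbf{p}$.

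Second, apply this to equation~\ref{eq:multi-distill}. By linearity of the gradient and of the finite sums over $j$ and $t$,
\[
\nabla_\theta \mathcal{L}^{\mathrm{I}}_{\mathrm{multi}} = -\sum_t \sum_s \Bigl(\sum_{j=0}^{m} \tfrac{1}{m+1} p^{(\ell+j)}_{t,s}\Bigr) \nabla_\theta \log q^{(\ell)}_{t,s} = -\sum_t\sum_s \bar p_{t,s}\, \nabla_\theta \log q^{(\ell)}_{t,s}.
\]
This last expression is exactly $\nabla_\theta \mathcal{L}^{\mathrm{I}}_{\mathrm{avg}}$, by the same first step applied to $\bar{\mathbf{p}}_t$. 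The first step applies because $\bar{\mathbf{p}}_t$ is a convex combination of probability vectors, so it is itself a valid distribution and is also detached from $\theta$.

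As a by-product, the two losses differ only by the constant
\[
\mathcal{L}^{\mathrm{I}}_{\mathrm{multi}} - \mathcal{L}^{\mathrm{I}}_{\mathrm{avg}} = \sum_t \Bigl(H(\bar{\mathbf{p}}_t) - \tfrac{1}{m+1}\textstyle\sum_j H(\mathbf{p}^{(\ell+j)}_t)\Bigr),
\]
which is a Jensen-gap term and is nonnegative by concavity of entropy.

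The only delicate point is the justification that the targets carry no $\theta$-dependence, since the whole argument rests on treating them as constants. I will state it explicitly as an assumption inherited from the detached-graph training of DSA. I will also note a convention about entries with $p_s=0$: by $0\log 0=0$, they contribute nothing to either the entropy or the cross-entropy term, so the expansions above remain valid.
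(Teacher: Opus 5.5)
Your proof is correct and follows the paper's argument. Like the paper, you drop the $\theta$-independent entropy term and then use linearity of the cross-entropy in the target to pull the average over $j$ inside, which produces $\bar{\mathbf{p}}_t$; your explicit detached-target assumption and the Jensen-gap expression for $\mathcal{L}^{\mathrm{I}}_{\mathrm{multi}} - \mathcal{L}^{\mathrm{I}}_{\mathrm{avg}}$ are correct additions that the paper does not state.
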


\begin{proof}
Since~$\mathbf{q}^{(\ell)}_t$ is the only parameter-dependent term in~$D_{\mathrm{KL}}(\mathbf{p} \| \mathbf{q}^{(\ell)}_t)$, the entropy of~$\mathbf{p}$ vanishes under differentiation:
$\nabla_\theta D_{\mathrm{KL}}(\mathbf{p} \| \mathbf{q}^{(\ell)}_t) = -\nabla_\theta \sum_{s} \mathbf{p}(s) \log \mathbf{q}^{(\ell)}_t(s)$.
Apply to Eq.~\ref{eq:multi-distill}:
{\footnotesize
\begin{align}
    \nabla_\theta \, \mathcal{L}^{\mathrm{I}}_{\mathrm{multi}}
    &= -\sum_{j=0}^{m} \frac{1}{m+1} \sum_{t} \nabla_\theta \sum_{s} \mathbf{p}^{(\ell+j)}_{t}(s) \log \mathbf{q}^{(\ell)}_t(s) \notag \\
    &= -\sum_{t} \nabla_\theta \sum_{s} \underbrace{\Bigl(\textstyle\sum_{j=0}^{m} \frac{1}{m+1} \mathbf{p}^{(\ell+j)}_{t}(s)\Bigr)}_{\bar{\mathbf{p}}_{t}(s)} \log \mathbf{q}^{(\ell)}_t(s)
    \;=\; \nabla_\theta \, \mathcal{L}^{\mathrm{I}}_{\mathrm{avg}}.
    \label{eq:grad-equiv-proof}
\end{align}
}
\end{proof}

\paragraph{Interpretation.}
Proposition~\ref{prop:grad-equiv} shows that multi-layer distillation is not merely a heuristic regularizer---it is \emph{exactly} equivalent to distilling the indexer toward the \emph{centroid} of the target layers' attention distributions.
The indexer therefore learns to predict a consensus top-$k$ that jointly covers the important tokens across all served layers.

Although the two loss formulations yield identical gradients, we adopt
$\mathcal{L}^{\mathrm{I}}_{\mathrm{multi}}$ in practice for implementation
efficiency.
When the subsequent layer is an \texttt{S} layer, it only needs to receive the current layer’s predicted $\mathbf{q}^{(\ell)}$. In contrast, training with $\mathcal{L}^{\mathrm{I}}_{\mathrm{avg}}$ requires passing both $\mathbf{q}^{(\ell)}$ and $\mathbf{p}^{(\ell)}$, which introduces unnecessary memory overhead and additional runtime cost.

\paragraph{Training.}
We follow the standard DSA training procedure with two stages. In the warm-up phase, we train the indexer in the \texttt{F} layer using $\mathcal{L}^{\mathrm{I}}_{\mathrm{multi}}$, while keeping all other parameters fixed. In the sparse training phase, we continue to train the indexer using $\mathcal{L}^{\mathrm{I}}_{\mathrm{multi}}$, defined as the KL divergence computed only over the selected top-$k$ tokens, and additionally include the LM loss to train the remaining parameters.

\section{Experiments}
\label{sec:experiments}

\subsection{Setup}
\label{sec:exp-setup}

\paragraph{Model.}
The DSA model used in our experiments was obtained through a two-stage training process starting from the base model of GLM-4.7-Flash\footnote{\url{https://huggingface.co/zai-org/GLM-4.7-Flash}}, a 30B-A3B MoE model with Multi-head Latent Attention~(MLA) and 47 layers. Its evaluation performance is comparable to that of the original GLM-4.7-Flash (see Table~6 in \citet{glm5}).

\paragraph{Training-free IndexCache.}
The greedy pattern search is guided by the per-token validation loss computed on SFT data with a batch size of 768 and a context length of 200K.

\paragraph{Training-aware IndexCache.}
A full DSA training pipeline starting from the base model requires substantial computational resources. Instead, we initialize directly from the GLM-4.7-Flash model and train it into a DSA model on SFT data with a context length of 200K. The training consists of a 1{,}000-step dense warm-up phase followed by a 4{,}000-step sparse training phase. This shorter pipeline closely matches the performance of full DSA training and suffices for evaluating IndexCache's training-aware component.

\paragraph{Evaluation.}
We include five long-context benchmarks: MRCR~v2~\citep{mrcr}, GraphWalks~\citep{graphwalks}, LongBench~v2~\citep{longbenchv2}, RULER~\citep{ruler}, and AA-LCR~\citep{aalcr}; four general \& reasoning benchmarks: AIME~2025~\citep{aime2025}, GPQA-Diamond~\citep{gpqa}, LiveCodeBench~v6~\citep{livecodebench}, and IFBench~\citep{ifbench}.
Evaluation setups are further detailed in Appendix~\ref{app:eval}.

\begin{table}[t]
\centering
\caption{%
    End-to-end inference performance of the 30B DSA model with IndexCache at two retention ratios.
    \textbf{Prefill time}: seconds (lower is better).
    \textbf{Decode per request}: tokens/s under single concurrency (higher is better).
    \textbf{Decode full}: total tokens/s (higher is better).
    Decode throughput is reported per GPU.
}
\label{tab:profiling}
\small
\begin{tabular}{lcccc}
\toprule
& 10K & 60K & 120K & 200K \\
\midrule
\multicolumn{5}{l}{\textit{Prefill time (s)} $\downarrow$} \\
\quad DSA & 0.57 & 3.38 & 8.57 & 19.5 \\
\quad + IndexCache (1/2) & 0.47 & 2.86 & 6.57 & 13.7 \\
\quad + IndexCache (1/4) & \textbf{0.45} & \textbf{2.59} & \textbf{5.66} & \textbf{10.7} \\
\midrule
\multicolumn{5}{l}{\textit{Decode throughput, per request (tok/s)} $\uparrow$} \\
\quad DSA & 73.5 & 67.0 & 63.0 & 58.0 \\
\quad + IndexCache (1/2) & 84.5 & 80.0 & 77.0 & 73.0 \\
\quad + IndexCache (1/4) & \textbf{91.0} & \textbf{89.5} & \textbf{88.0} & \textbf{86.0} \\
\midrule
\multicolumn{5}{l}{\textit{Decode throughput, full KV cache (tok/s)} $\uparrow$} \\
\quad DSA & 2700 & 613 & 341 & 197 \\
\quad + IndexCache (1/2) & 3070 & 750 & 431 & 253 \\
\quad + IndexCache (1/4) & \textbf{3310} & \textbf{840} & \textbf{498} & \textbf{297} \\
\bottomrule
\end{tabular}
\end{table}

\begin{figure}[t]
\centering
\includegraphics[width=\textwidth]{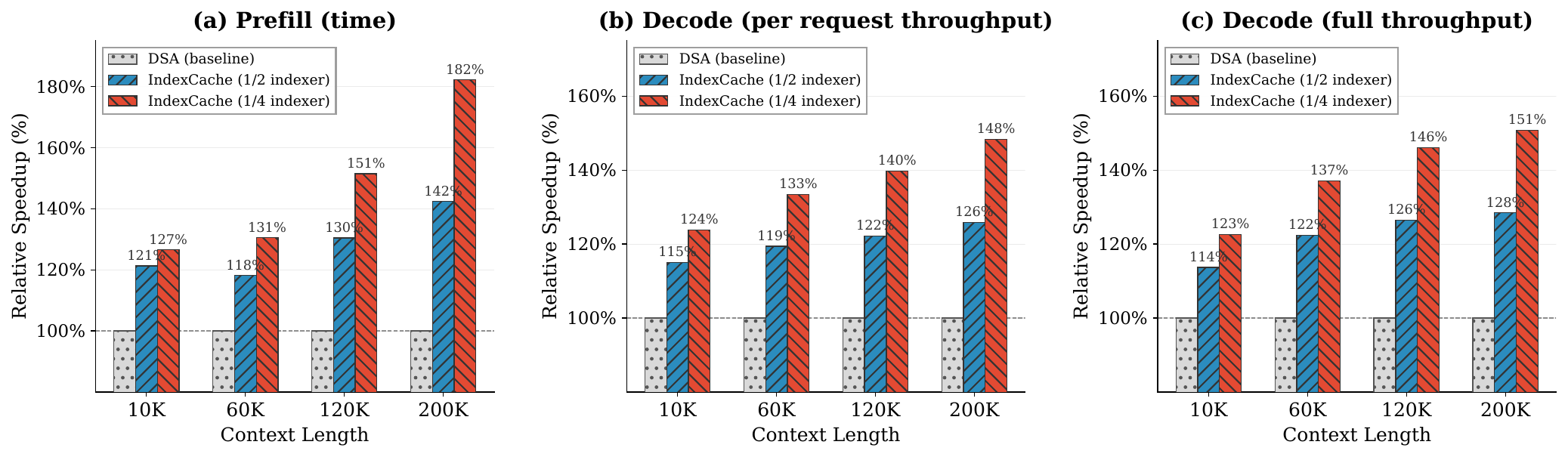}
\caption{%
    Relative speedup of IndexCache over the DSA baseline across three inference settings on the 30B model.
    DSA baseline is normalized to 100\%.
}
\label{fig:profiling-speedup}
\end{figure}

\subsection{End-to-End Inference Speedup}
\label{sec:exp-profiling}
We measure end-to-end inference performance using the 30B-parameter DSA model served with \texttt{dp\_attention} enabled (\texttt{dp\_size=8}) in SGLang, running on an NVIDIA H100 node.
We compare the original DSA baseline against IndexCache at two retention ratios: 1/2 (half of the indexer layers retained) and 1/4 (a quarter retained).
We report three complementary metrics across context lengths of 10K, 60K, 120K, and 200K tokens:
(1)~\emph{prefill latency} which measures time-to-first-token;
(2)~\emph{per-request decode throughput} under single concurrency (single request on each GPU);
and (3)~\emph{total decode throughput} when the KV cache is fully utilized (${\sim}$800K tokens on each GPU).
Table~\ref{tab:profiling} and Figure~\ref{fig:profiling-speedup} summarize the results.

\paragraph{Prefill.}
IndexCache delivers substantial prefill acceleration that grows with context length.
At 200K tokens, IndexCache~(1/4) reduces prefill latency from 19.5s to 10.7s, achieving a \textbf{1.82$\times$} speedup, by eliminating 75\% of the indexer computations that dominate the prefill phase.
Even at 10K, where the indexer accounts for a smaller fraction of total compute, a 1.27$\times$ speedup is observed.
Extrapolating to longer contexts ($>$200K), IndexCache is expected to deliver even greater speedups.

\paragraph{Decode.}
The per-request decode throughput improvement is significant at long contexts.
At 200K, DSA's decode speed is 58 tok/s, while IndexCache~(1/4) achieves 86 tok/s, a \textbf{1.48$\times$} speedup.
This is because the decode phase in DSA involves a per-token indexer pass over the full context, which becomes the bottleneck at long sequences; IndexCache directly reduces this bottleneck.
When the KV cache is fully saturated, IndexCache~(1/4) improves total decode throughput by 22-51\% across context lengths, with the largest gains at longer contexts (197$\to$297 tok/s at 200K, a \textbf{1.51$\times$} increase).

We observe similar trends on the larger GLM-5 model (744B parameters), where IndexCache~(1/4) yields at least \textbf{1.3$\times$} improvement in both prefill latency and decode throughput at context lengths beyond 100K.
Overall, our end-to-end prefill latency and decode throughput suggest IndexCache is particularly valuable for the long-context scenario.

\subsection{Training-Free IndexCache Results}
\label{sec:exp-trainfree}

Table~\ref{tab:trainfree} reports results of training-free IndexCache on the same 30B DSA model, comparing three retention ratios: 1/2, 1/4, and 1/8, each under a uniform interleave baseline and a greedy-searched pattern (Appendix~\ref{app:pattern}).

\begin{table}[t]
\centering
\caption{%
Training-free IndexCache at 1/2, 1/4, and 1/8 indexer retention.
`Long' and `G\&R' aggregate benchmark scores.
We compare uniform interleaving against searched patterns.
}
\label{tab:trainfree}
\resizebox{\textwidth}{!}{%
\begin{tabular}{lcc ccccc ccccc}
\toprule
& \multicolumn{2}{c}{\textbf{Averages}} & \multicolumn{5}{c}{\textbf{Long-Context}} & \multicolumn{4}{c}{\textbf{General \& Reasoning}} \\
\cmidrule(lr){2-3} \cmidrule(lr){4-8} \cmidrule(lr){9-12}
\textbf{Config} & Long & G\&R & MRCR & GW & LB2 & RULER & LCR & AIME & GPQA & LCB & IFB \\
\midrule
Original DSA & 50.2 & 74.6 & 24.5 & \textbf{49.6} & 45.5 & \textbf{87.9} & 43.6 & 91.0 & 77.6 & \textbf{71.4} & 58.4 \\
\midrule
\rowcolor{mygray}1/2 Unif. IndexCache & 47.4 & 74.3 & 22.0 & 46.6 & 46.0 & 83.6 & 38.6 & 92.2 & 76.4 & 69.7 & \textbf{59.0} \\
\quad+Search pattern & \textbf{50.3} & 74.4 & \textbf{24.7} & 49.5 & \textbf{46.3} & 87.8 & 43.2 & 91.9 & 76.3 & 71.3 & 58.2 \\
\rowcolor{mygray}1/4 Unif. IndexCache & 43.0 & 73.8 & 17.7 & 37.2 & 43.1 & 79.2 & 37.8 & 91.3 & 75.7 & 69.4 & 58.9 \\
\quad+Search pattern & 49.9 & \textbf{74.9} & 25.1 & 47.4 & 45.7 & 87.6 & \textbf{43.8} & \textbf{92.6} & \textbf{78.6} & 70.0 & 58.3 \\
\rowcolor{mygray}1/8 Unif. IndexCache & 35.3 & 70.0 & 12.9 & 33.1 & 37.7 & 68.8 & 24.0 & 89.1 & 74.1 & 58.7 & 58.0 \\
\quad+Search pattern & 46.1 & 73.7 & 21.7 & 43.8 & 42.3 & 82.0 & 40.8 & 90.7 & 76.5 & 69.6 & 58.1 \\
\bottomrule
\end{tabular}%
}
\end{table}

\paragraph{Searched patterns close the gap on long-context tasks.}
Uniform interleaving at aggressive retention ratios incurs significant long-context degradation: 1/2 and 1/4 uniform interleaving drops Long~Avg by 2.8 and 7.2 points (50.2$\to$47.4 and 50.2$\to$43.0)
The greedy-searched pattern largely eliminates this deficit, recovering Long~Avg to 49.9 at 1/4 retention and to 50.3 at 1/2 retention, both comparable to the original DSA.
This confirms that \emph{which} indexer layers are retained matters far more than how many.
Nevertheless, when retaining only 1/8 of the indexer layers, we observe a substantial degradation: Long~Avg drops to 35.3 with uniform interleaving and to 46.1 with the searched pattern. While search-based IndexCache still markedly mitigates the loss caused by removing indexers, the resulting decline in long-context performance at this extreme sparsity becomes non-negligible.

\paragraph{Long chain-of-thought reasoning capabilities are preserved.}
Across all configurations except for uniform interleave at 1/8 ratio, G\&R~Avg stays within 1 point of the 74.6 baseline (73.7-74.9 vs.\ 74.6).
Notably, the 1/4 searched pattern \emph{improves} over DSA on AIME~2025 (92.6 vs.\ 91.0) and GPQA-Diamond (78.6 vs.\ 77.6), suggesting that removing redundant indexer computation may act as a mild regularizer during inference.
This confirms that IndexCache does not trade general reasoning ability for long-context efficiency.

\subsection{Training-Aware IndexCache Results}
\label{sec:exp-trainaware}

We perform training-aware IndexCache using the multi-layer distillation loss~(Section~\ref{sec:method-trainaware}) at two retention ratios: 1/2 and 1/4, both with uniform interleaving.
We further ablate two design choices at the 1/2 retention ratio: (1) replacing the uniform interleaving pattern with the greedy-searched pattern from Section~\ref{sec:exp-trainfree}, and (2) removing the cross-layer distillation loss (i.e., training each indexer only against its own layer's attention distribution).
Table~\ref{tab:trainaware} reports the evaluation results.
Note that the DSA baseline in this section is also trained using our shortened DSA training pipeline, which results in a small performance difference compared to the original DSA reported in Table~\ref{tab:trainfree}. 

\begin{table}[t]
\centering
\caption{%
    Training-aware IndexCache at 1/2 and 1/4 indexer retention with uniform interleaving.
    \emph{w/ searched pattern}: the greedy-searched pattern replaces uniform interleaving.
    \emph{w/o cross-layer loss}: each indexer is distilled only against its own layer.
}
\label{tab:trainaware}
\resizebox{\textwidth}{!}{%
\begin{tabular}{lcc ccccc cccc}
\toprule
& \multicolumn{2}{c}{\textbf{Averages}} & \multicolumn{5}{c}{\textbf{Long-Context}} & \multicolumn{4}{c}{\textbf{General \& Reasoning}} \\
\cmidrule(lr){2-3} \cmidrule(lr){4-8} \cmidrule(lr){9-12}
\textbf{Config} & Long & G\&R & MRCR & GW & LB2 & RULER & LCR & AIME & GPQA & LCB & IFB \\
\midrule
Original DSA & 51.0 & 74.2 & \textbf{24.7} & 49.1 & 46.9 & 87.3 & 47.0 & 88.8 & \textbf{79.4} & 70.5 & 57.9 \\
\midrule
1/2 Unif. IndexCache & \textbf{51.6} & \textbf{74.5} & 23.8 & \textbf{50.2} & \textbf{47.2} & 87.0 & \textbf{49.8} & 89.3 & 76.7 & \textbf{72.2} & \textbf{59.9} \\
\rowcolor{mygray}\quad w/ searched pattern & 50.6 & 73.6 & 23.9 & 48.1 & 47.1 & \textbf{87.5} & 46.6 & \textbf{89.6} & 78.6 & 68.5 & 57.7 \\
\rowcolor{mygray}\quad w/o cross-layer loss & 49.8 & 74.5 & 24.6 & 48.3 & 45.0 & 87.1 & 44.0 & 88.8 & 79.4 & 71.7 & 58.0 \\
1/4 Unif. IndexCache & 50.6 & 74.1 & 23.7 & 48.1 & 46.9 & 86.1 & 48.4 & 89.3 & 78.0 & 70.5 & 58.7 \\
\bottomrule
\end{tabular}%
}
\end{table}

\paragraph{Training-aware IndexCache matches DSA baseline.}
Uniform IndexCache with 1/2 ratio achieves a Long~Avg of 51.6, \emph{surpassing} the baseline (51.0), while G\&R~Avg remains comparable (74.5 vs.\ 74.2).
At 1/4 retention, both Long~Avg and G\&R~Avg are within 0.4\% of the baseline.
These results confirm that DSA can be trained to adapt to the sharing pattern.

\paragraph{The pattern sensitivity observed in training-free IndexCache vanishes with training.}
A striking contrast with Section~\ref{sec:exp-trainfree} emerges: uniform interleaving at 1/2 retention performs on par with and even slightly above the greedy-searched pattern (Long~Avg 51.6 vs.\ 50.6; G\&R~Avg 74.5 vs.\ 73.6).
Recall that in the training-free setting, the searched pattern was essential for recovering quality at aggressive retention ratios.
This is because, without retraining, certain layers are strongly coupled to their own indexer's top-$k$ selection; inheriting indices from an earlier layer introduces a distributional shift that causes sharp performance drops.
The greedy search in the training-free setting works precisely by avoiding these sensitive layers.
However, when the model is retrained with a sharing-aware objective, the \texttt{S}~layers learn to adapt their attention to inherited indices, and the retained indexers simultaneously learn to produce selections that generalize across their served layers.
This joint adaptation eliminates the layer-specific sensitivity entirely, allowing even a simple uniform pattern to match the full-indexer baseline.

\paragraph{Cross-layer distillation provides a meaningful benefit.}
Removing the cross-layer loss drops Long~Avg from 51.6 to 49.8, with AA-LCR falling from 49.8 to 44.0.
This confirms that the multi-layer distillation objective is practically beneficial: by training each indexer toward the centroid of its served layers' attention distributions, it learns a consensus top-$k$ that generalizes across layers rather than overfitting to a single one.

\subsection{Scaling Experiment}
\label{sec:scales}

\begin{table}[t]
\centering
\caption{%
    Preliminary results on GLM-5 (744B) with training-free IndexCache.
}
\label{tab:scaling}
\resizebox{0.95\textwidth}{!}{%
\begin{tabular}{lc ccccc}
\toprule
& \textbf{Long Avg} & \textbf{MRCR v2} & \textbf{GraphWalks} & \textbf{LongBench v2} & \textbf{RULER} & \textbf{AA-LCR} \\
\midrule
Original DSA & 78.4 & 71.1 & \textbf{92.7} & 64.5 & \textbf{97.7} & 66.2 \\
\midrule
\rowcolor{mygray}1/2 Unif. IndexCache & 78.1 & \textbf{72.8} & 90.2 & 65.1 & 97.6 & 64.6 \\
\quad+Searched pattern & \textbf{78.7} & 72.3 & 90.8 & \textbf{66.0} & 97.3 & 67.2 \\
\rowcolor{mygray}1/4 Unif. IndexCache & 72.7 & 65.8 & 74.9 & 62.2 & 96.2 & 64.6 \\
\quad+Searched pattern & 78.0 & 70.8 & 90.3 & 63.7 & 97.6 & \textbf{67.6} \\
\bottomrule
\end{tabular}%
}
\end{table}

We apply training-free IndexCache to GLM-5, a 744B-parameter (40B active) model that uses DSA by default. Table~\ref{tab:scaling} reports results on five long-context benchmarks.
The overall trends mirror the 30B findings: uniform interleaving degrades at aggressive retention, while the searched pattern recovers quality.
Interestingly, uniform interleaving at 1/2 retention happens to preserve Long~Avg (78.1 vs.\ 78.4), but this is likely coincidental, where the fixed alternating pattern simply avoids skipping the most critical indexer layers by chance.
The searched pattern provides consistently stable results: at 1/2 retention it slightly \emph{exceeds} the baseline (78.7 vs.\ 78.4), and at 1/4 retention it remains within 0.4 points (78.0 vs.\ 78.4).
We also conducted an all-round evaluation of IndexCache with 1/2 indexer retention across all tests on the Artificial Analysis Index, and its performance is nearly identical to that of the original GLM‑5 model (Figure~\ref{fig:glm5}).

We plan to apply training-aware IndexCache to this production-scale model in the near future. Given that the training-free variant already matches baseline quality, we are optimistic that training-aware adaptation will further solidify these gains and translate into concrete deployment-time efficiency benefits.

\section{Related Work}
\label{sec:related}

\subsection{Efficient Attention}
\label{sec:related-efficient}

Reducing the quadratic cost of self-attention is a central research theme, especially in the era of long-horizon agents.
\emph{Training-free sparse} methods introduce sparsity at inference via fixed patterns, heuristic eviction strategies, or lightweight importance estimation~\citep{h2o,streamingllm,sparq,quest,minference,duoattention,xattention,flexprefill,moa,sampleattention,lserve,spargeattention}.
Nevertheless, the resulting training-inference mismatch can cause error accumulation in long-context settings~\citep{lil}.
In contrast, \emph{trainable sparse} methods incorporate sparsity directly into the training stage, for example, through learned gating mechanisms~\citep{seerattention,seerattentionr}, end-to-end sparse pre-training~\citep{nsa}, block-level mixture routing~\citep{moba,infllm-v2,minicpm4}, or full-to-sparse distillation~\citep{deepseekv32,ssa}.
DSA~\citep{deepseekv32}, the foundation of our work, distills a lightweight lightning indexer from full attention to select the top-$k$ tokens for each query, reducing the core attention complexity to~$O(Lk)$.
Beyond sparsity, \emph{hybrid architectures} reduce the number of expensive quadratic layers by interleaving them with sliding window attention (with or without sink)~\citep{gptoss,gemma3,mimov2,longcat}, linear attention~\citep{gdn,minimax01,nemotron-h,kimilinear}, or state-space layers~\citep{mamba,mamba2,jamba}.

\subsection{Cross-Layer Sharing}
\label{sec:related-crosslayer}
Recent studies demonstrate that representation exhibits strong consistency across adjacent layers.
This structural property is often leveraged to reduce computational redundancy and accelerate inference.
TidalDecode~\citep{tidaldecode}, LessIsMore~\citep{lessismore}, OmniKV~\citep{omnikv}, and DELTA~\citep{delta} reuse top-$k$ indices from periodic anchor layers for sparse decoding.
Kascade~\citep{kascade} formalizes anchor layer selection via dynamic programming over a cross-layer similarity matrix and identifies head-aware remapping as critical for maintaining accuracy.
All of these methods rely on \emph{full attention} at anchor layers to compute exact top-$k$ indices.
Independently, cross-layer \emph{KV cache sharing} reduces memory by letting multiple layers reuse the same key-value tensors~\citep{yoco,cla,minicache,swiftkv,mlkv,wu2025improving}.
HySparse~\citep{hysparse} unifies both directions, interleaving full attention layers with sparse layers that inherit both top-$k$ block indices and KV~caches.
However, all of these approaches require full attention layers as the oracle, which DSA removes completely.

IndexCache differs in two key aspects.
First, the oracle is fundamentally cheaper because we share the output of DSA's lightweight indexer rather than full~$O(L^2)$ attention scores.
Second, we introduce systematic techniques for optimizing the sharing configuration, including a training-free greedy search to identify the optimal structural layout and a training-aware multi-layer distillation loss for parameter adaptation.
Although we instantiate IndexCache on DSA, the core principle extends to any sparse attention method that does not rely on a fixed sparse pattern but rather involves a dynamic token selection step: for instance, the block-level selection in MoBA~\citep{moba} and NSA~\citep{nsa} could similarly benefit from cross-layer reuse.
\section{Conclusion}
\label{sec:conclusion}

We have presented IndexCache, a method that accelerates sparse attention by exploiting the cross-layer redundancy of the indexer in charge of token selection.
IndexCache partitions layers into a small number of~\texttt{F}~layers that retain their indexers and a majority of~\texttt{S}~layers that reuse inherited top-$k$ indices, eliminating up to~75\%\ of the~$O(NL^2)$ total indexer cost with a single conditional branch without any performance degradation.
More broadly, our work demonstrates that the cross-layer sharing principle that previously applied only where full attention serves as the oracle extends naturally to sparse attention.
As sparse attention becomes the default for frontier LLMs (DeepSeek-V3.2, GLM-5), we expect cross-layer index reuse to become a standard component of efficient inference pipelines.

\bibliography{references}
\bibliographystyle{colm2026_conference}

\newpage
\appendix
\section*{Appendix}
\label{sec:appendix}

\section{Cross-Layer Top-$k$ Index Overlap}
\label{app:topk-overlap}

To empirically validate the cross-layer redundancy of top-$k$ index selections in DSA, we compute the pairwise overlap ratio between the top-$k$ indices selected by each layer's lightning indexer.
Specifically, for every pair of layers~$(i, j)$, we measure~$|\mathcal{T}^{(i)} \cap \mathcal{T}^{(j)}| / k$ (where~$k = 2048$) averaged over 768 samples of 200K length in a calibration set.

\begin{figure}[htbp]
\centering
\includegraphics[width=0.7\linewidth]{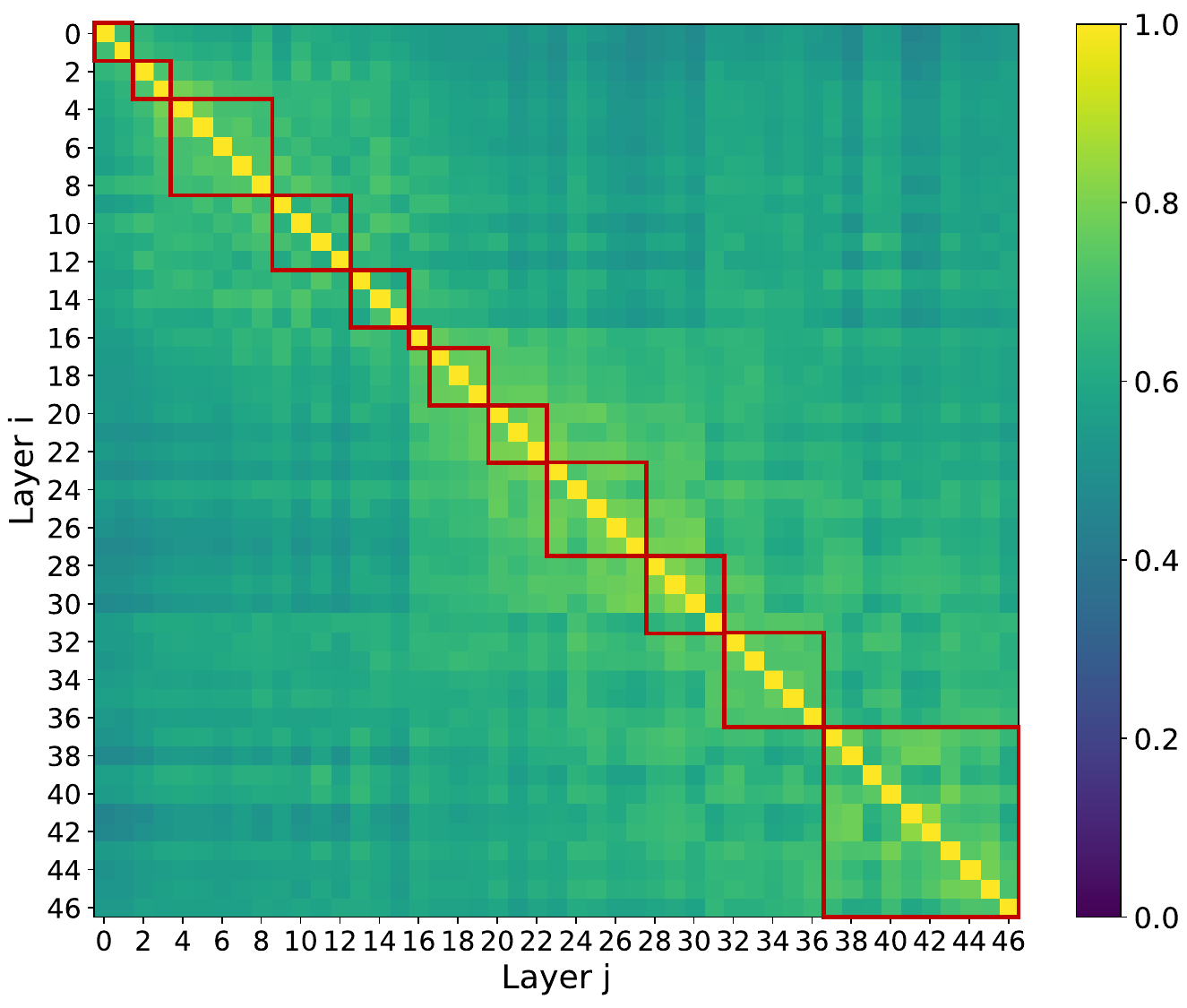}
\caption{Pairwise top-$k$ index overlap ratio between all layer pairs of the 30B DSA model. Shared blocks according to the greedy-searched 1/4 IndexCache pattern are \textcolor{BrickRed}{\textbf{marked}}.}
\label{fig:topk-iou-heatmap}
\end{figure}

Figure~\ref{fig:topk-iou-heatmap} visualizes this overlap as a heatmap for the 47-layer 30B DSA model, with the sharing blocks from the greedy-searched 1/4-retention pattern marked as red boxes.
Several patterns are evident:
\begin{itemize}[leftmargin=*,itemsep=2pt]
    \item \textbf{High overlap near the diagonal.} Adjacent layers exhibit overlap ratios of~0.7-1.0, confirming that consecutive layers select largely the same set of tokens.
    \item \textbf{Block structure.} The heatmap reveals distinct clusters of layers with mutually high overlap, for instance, layers~3-5, 6-8, 17-30, 31-36, etc., suggesting that the model organizes into functional blocks where token selection is internally consistent.
    \item \textbf{Uneven decay.} Overlap decreases more rapidly across block boundaries than within them, indicating that a few ``transition'' layers shift the attention focus substantially.
    \item \textbf{Early-late distinction.} The bottom-left and top-right corners of the heatmap are notably dark (overlap~$\leq 0.4$), showing that early and late layers attend to fundamentally different token subsets.
\end{itemize}

\paragraph{Greedy-searched blocks vs.\ overlap clusters.}
Comparing the red boxes (greedy-searched sharing blocks) with the natural overlap clusters reveals an informative mismatch.
While the greedy search does place~\texttt{F}~layers near some visually obvious cluster boundaries, the two partitions do \emph{not} fully coincide.
The root cause is that overlap is an \emph{aggregate} metric: it counts how many tokens are shared but not \emph{which} ones differ.
In the training-free setting, weights are frozen, so even a small set of mismatched critical tokens can perturb a layer's hidden state in ways that cascade through all downstream layers.
Early layers are especially vulnerable because their perturbations traverse the longest propagation path.

This observation also echoes our negative result with the similarity-based pattern search (Appendix~\ref{app:similarity-search}): local metrics---whether cosine similarity of attention outputs or top-$k$ index overlap---lack the discriminative power to identify the optimal sharing pattern, necessitating end-to-end evaluation.

\section{Searched Patterns}
\label{app:pattern}
Searched patterns for GLM-4.7-Flash 30B DSA:
\begin{itemize}[leftmargin=*,itemsep=2pt]
    \item Keep 1/2 `\texttt{F}'s: \texttt{FSFSFSSSSFSFFFFSFFSSFFSFFFSSFFSSFSSSSFSFFFSFSSF}
    \item Keep 1/4 `\texttt{F}'s: \texttt{FSFSFSSSSFSSSFSSFFSSFSSFSSSSFSSSFSSSSFSSSSSSSSS}
    \item Keep 1/8 `\texttt{F}'s: 
    \texttt{FSSSFSSSSSSSSFSSSFSSSSSFSSSSFSSSSSSSSFSSSSSSSSS}
\end{itemize}
Searched patterns for GLM-5:
\begin{itemize}[leftmargin=*,itemsep=2pt]
    \item Keep 1/2 `\texttt{F}'s: \texttt{FFSFSSSFSSFFFSSSFFFSFSSSSSSFFSFFSFFSSFFFFFFSFFFFFSFFSSSSSSFSFFFS\allowbreak FSSSFSFFSFFSSS}
    \item Keep 1/4 `\texttt{F}'s: \texttt{FFSFSSSFSSFSFSSSSSSSFSSSSSSFSSSFSFSSSSFFFFFSSSFFSSSFSSSSSSSSFSSS\allowbreak FSSSSSSFSFSSSS}
\end{itemize}

\section{Similarity-based Pattern Search}
\label{app:similarity-search}
We believe papers should report not only positive results but also negative (or unsuccessful) ones.
Before arriving at the greedy loss-based search described in Section~\ref{sec:method-greedy}, we explored a seemingly natural alternative: choosing the sharing pattern by directly measuring how similar the attention outputs are when an indexer is reused across layers.
Although this approach is theoretically motivated and computationally cheaper than the greedy search, it ultimately proved insufficient as a proxy for downstream quality.
We describe it here for completeness and to provide insight into why the loss-based search is necessary.

\paragraph{Constructing the similarity matrix.}
Given a DSA model with~$N$ layers, we perform $N$ single forward passes over a calibration set and, for each layer pair~$(i, j)$ with~$i > j$, compute the cosine similarity between:
\begin{enumerate}[leftmargin=*,itemsep=1pt]
    \item the core attention output at layer~$i$ when using layer~$i$'s \emph{own} indexer (i.e., the original model), and
    \item the core attention output at layer~$i$ when \emph{reusing} layer~$j$'s indexer (i.e., as if layer~$i$ were an~\texttt{S}~layer sharing from layer~$j$).
\end{enumerate}
This yields an $N \times N$ lower-triangular similarity matrix~$\mathbf{S}$, where $S_{i,j}$ quantifies how well layer~$j$'s index can serve as a proxy for layer~$i$'s own index.
Intuitively, if $S_{i,j}$ is close to~1, then layer~$i$ can safely skip its own indexer computation and reuse the index from layer~$j$ with minimal distortion to its attention output.

\paragraph{Dynamic programming formulation.}
Given the similarity matrix~$\mathbf{S}$ and a target number of~\texttt{F}~layers~$M$ (equivalently, $N - M$~\texttt{S}~layers), we seek the pattern~$\mathbf{c}^*$ that maximizes the total similarity:
\begin{equation}
    \mathbf{c}^* = \arg\max_{\mathbf{c}:\, |\{i : c_i = \texttt{F}\}| = M} \;\sum_{\ell:\, c_\ell = \texttt{S}} S_{\ell,\, \mathrm{src}(\ell)}
\end{equation}
where $\mathrm{src}(\ell)$ denotes the most recent preceding~\texttt{F}~layer from which layer~$\ell$ inherits its index (i.e., $\mathrm{src}(\ell) = \max\{j < \ell : c_j = \texttt{F}\}$).

This can be solved exactly via dynamic programming.
Let $\mathrm{dp}[i][k]$ denote the maximum cumulative similarity achievable for layers~$1, \ldots, i$ using exactly~$k$~\texttt{F}~layers, with layer~$i$ itself being an~\texttt{F}~layer.
The transition considers all possible previous~\texttt{F}~layers:
\begin{equation}
    \mathrm{dp}[i][k] = \max_{j < i,\; c_j = \texttt{F}} \left\{ \mathrm{dp}[j][k{-}1] + \sum_{m=j+1}^{i-1} S_{m,j} \right\}
\end{equation}
where the summation accounts for all~\texttt{S}~layers between~$j$ and~$i$, each reusing layer~$j$'s index.
We recover the optimal pattern by backtracking through the DP table.

\begin{table}[t]
\centering
\caption{%
    Evaluation results of training-free \emph{similarity-based} searched pattern.
}
\label{tab:sim}
\resizebox{0.7\textwidth}{!}{%
\begin{tabular}{l cccc}
\toprule
& \textbf{Avg} & \textbf{MRCR v2} & \textbf{GraphWalks} & \textbf{RULER} \\
\midrule
Original DSA & \textbf{54.0} & \textbf{24.5} & \textbf{49.6} & \textbf{87.9}  \\
\midrule
\rowcolor{mygray}1/2 Unif. IndexCache & 50.7 & 22.0 & 46.6 & 83.6 \\
\quad+Searched pattern & 49.8 & 22.9 & 43.5 & 82.9 \\
\bottomrule
\end{tabular}%
}
\end{table}

\paragraph{Result: similarity-optimal patterns perform comparably to uniform interleaving on downstream tasks.}
Like uniform interleaving, DP-searched patterns exhibit the same significant quality degradation relative to the original DSA model, as shown in Table~\ref{tab:sim}.
In other words, despite explicitly optimizing for maximal cross-layer similarity, the resulting patterns offer no meaningful advantage over the na\"{\i}ve uniform baseline.
In contrast, the greedy loss-based search (Section~\ref{sec:method-greedy}) produces patterns that \emph{substantially} outperform both uniform and similarity-optimal patterns, especially at aggressive retention ratios.

\paragraph{Why similarity fails as a proxy.}
The fundamental issue is that per-layer output similarity is a \emph{local} metric: it measures how well a single layer's attention output is preserved in isolation, without accounting for how small perturbations propagate across the remaining layers.
Two layers may have nearly identical attention outputs ($S_{i,j} \approx 1$) yet differ in subtle ways that matter for downstream quality: for instance, the reused index may miss a small number of critical tokens whose importance only becomes apparent in later layers' reasoning steps.
These subtle mismatches accumulate through the layers, leading to non-negligible final quality degradation that a layer-local similarity score cannot predict.

The greedy loss-based search avoids this pitfall by directly optimizing a \emph{global} metric, i.e., the LM loss, which captures the end-to-end effect of each sharing decision on the model's output distribution.
This allows it to identify ``critical'' layers (those whose indexers must be retained to avoid cascading errors) that the similarity-based approach treats as interchangeable with their neighbors.

\section{Evaluation Setup}
\label{app:eval}
All benchmarks are evaluated with temperature~1.0, top-$p$~=~0.95, and top-$k$~=~40. For long-context tasks, we set a total context window of 200K tokens with 32K reserved for the output. For general \& reasoning tasks, we allow a maximum output length of 64K tokens.
On MRCR~v2, we report the average score across 2-, 4-, and 8-needle settings.
On GraphWalks, we report the average score over the Parent-type and BFS-type problems.
On RULER, we report scores on all instances with context lengths ranging from 4K to 128K.
For MRCR~v2 and GraphWalks, we include only instances whose input length fits within the effective input budget (200K$-$32K$=$168K tokens).
For LongBench~v2, RULER, and AA-LCR, we include all instances and apply middle truncation to those exceeding 168K tokens.

\end{document}